\newcommand{\R}{\mathbb{R}}
\newcommand{\diam}{\texttt{diam}}
\newcommand{\Ucal}{\mathcal{U}}
\newcommand{\Vcal}{\mathcal{V}}
\newcommand{\Nrv}{\mathrm{Nrv}}
\newcommand{\Sf}{Steinhaus filtration}
\newcommand{\mymarginpar}[1]{}
\newlist{todolist}{itemize}{2}
\setlist[todolist]{label=$\square$}
\definecolor{darkgrn}{rgb}{0, 0.8, 0}
\theoremstyle{plain}
\newtheorem{theorem}{Theorem}[section]
\newtheorem{lemma}[theorem]{Lemma}
\newtheorem{proposition}[theorem]{Proposition}
\theoremstyle{definition}
\newtheorem{definition}[theorem]{Definition}
\theoremstyle{remark}
\title{Steinhaus Filtration and Stable Paths in the Mapper}
\author{
 Dustin L. Arendt\footnote{Visual Analytics Group, Pacific Northwest National Laboratory, USA; \href{mailto:dustin.arendt@pnnl.gov}{dustin.arendt@pnnl.gov}}
 \quad
 Matthew Broussard\footnote{TD Bank, USA; \href{mailto:matthewbrouss@gmail.com}{matthewbrouss@gmail.com}}
 \quad
 Bala Krishnamoorthy\footnote{Department of Mathematics and Statistics, Washington State University, USA; \href{mailto:kbala@wsu.edu}{kbala@wsu.edu}}
 \newline
 Nathaniel Saul\footnote{Stripe, USA; \href{mailto:nat@riverasaul.com}{nat@riverasaul.com}}
 \quad
 Amber Thrall\footnote{Department of Mathematics and Statistics, Washington State University, USA; \href{mailto:amber.thrall@wsu.edu}{amber.thrall@wsu.edu}}
}
\begin{document}

\maketitle


\begin{abstract}
    We define a new filtration called the \emph{Steinhaus filtration} built from a single cover based on a generalized Steinhaus distance, a generalization of Jaccard distance.
The homology persistence module of a Steinhaus filtration with infinitely many cover elements may not be $q$-tame, even when the covers are in a totally bounded space.
While this may pose a challenge to derive stability results,
we show that the Steinhaus filtration is stable when the cover is finite.
We show that while the \v{C}ech and Steinhaus filtrations are not isomorphic in general, they are isomorphic for a finite point set in dimension one.
Furthermore, the VR filtration completely determines the $1$-skeleton of the Steinhaus filtration in arbitrary dimension.

We then develop a language and theory for stable paths within the Steinhaus filtration.
We demonstrate how the framework can be applied to several applications where a standard metric may not be defined but a cover is readily available.
We introduce a new perspective for modeling recommendation system datasets.
As an example, we look at a movies dataset and we find the stable paths identified in our framework represent a sequence of movies constituting a gentle transition and ordering from one genre to another. 

For explainable machine learning, we apply the Mapper algorithm for model induction by building a filtration from a single Mapper complex, and provide explanations in the form of stable paths between subpopulations.
For illustration, we build a Mapper complex from a supervised machine learning model trained on the FashionMNIST dataset.
Stable paths in the Steinhaus filtration provide improved explanations of relationships between subpopulations of images.

 \end{abstract}

\noindent {\bfseries Keywords:}
Cover and nerve, Jaccard distance, persistence stability, Mapper, recommender systems, explainable machine learning.

\clearpage

\section{Introduction}
The need to rigorously seed a solution with a notion of stability in topological data analysis (TDA) has been addressed primarily using topological persistence \cite{Ca2009,ChMi2021,Gh2008barcodes}.
Persistence arises when we work with a sequence of objects built on a dataset, a \emph{filtration}, rather than with a single object.
One line of focus of this work has been on estimating the homology of the dataset. 
This typically manifests itself as examining the persistent homology represented as a diagram or barcode, with interpretations of zeroth and first homology as capturing significant clusters and holes, respectively \cite{AdCa2009,EdHa2009,EdLeZo2002,Zo2005}.
In practice it is not always clear how to interpret higher dimensional homology (even holes might not make obvious sense in certain cases).
A growing focus is to use persistence diagrams 
to help compare different datasets rather than interpret individual homology groups \cite{Adetal2017,ChCoGuMeOu2009,TuMiMuHa2014}.

The implicit assumption in most such TDA applications is that the data is endowed with a natural metric, e.g., points exist in a high-dimensional space or pairwise distances are available.
In certain applications, it is also not clear how one could assign a meaningful metric.
For example, memberships of people in groups of interest is captured simply as sets specifying who belongs in each group.
An instance of such data is that of recommendation systems, e.g., as used in Netflix to recommend movies to the customer.
Graph based recommendation systems have been an area of recent research.
Usually these systems are modeled as a bipartite graph with one set of nodes representing recommendees and the other representing recommendations.
In practice, these systems are augmented in bespoke ways to accommodate whichever type of data is available.
It is highly desirable to analyze the structure without prescribing some ill-fit or incomplete metric to the data. 

Another TDA approach for structure discovery and visualization of high-dimensional data is based on a construction called \emph{Mapper} \cite{SiMeCa2007}.
Defined as the nerve of a refined pullback cover of the data (See Figure \ref{fig:circlemapper}), Mapper has found increasing use in diverse applications in the past several years \cite{Lumetal2013}.
Attention has recently focused on interpreting parts of the $1$-skeleton of the Mapper complex, which is a simplicial complex, as significant features of the data.
Paths, flares, and cycles have been investigated in this context \cite{Lietal2015,NiLeCa2011,ToOlThRaCuSc2016}.
The framework of persistence has been applied to this construction to define a \emph{multi-scale} Mapper, which permits one to derive results on stability of such features \cite{DeMeWa2016}.
At the same time, the associated computational framework remains unwieldy and still most applications base their interpretations on a single Mapper object.

\begin{figure}[htp!]
 \includegraphics[width=\textwidth]{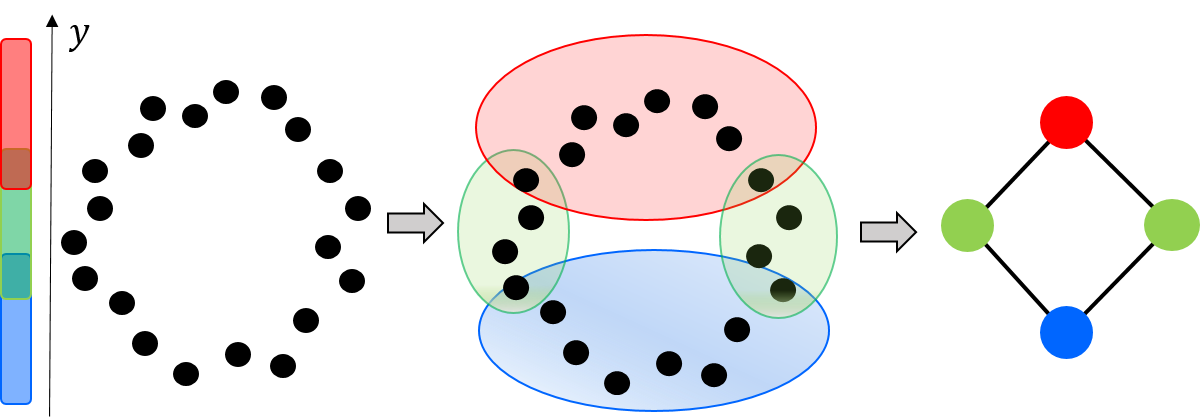}
 \caption{Mapper (Right) constructed on a noisy point set sampled from a circle (Left).
 The cover consists of three overlapping intervals (in blue, green, red) covering the range of $y$-coordinate values of the points.
 }\label{fig:circlemapper}
\end{figure}

Note that the Mapper construction works with covers.
The default approach is to start with overlapping hypercubes that cover a parameter space, which is usually a subset of $\R^d$ for some dimension $d$, and consider the pullback of this cover to the space of data.
In recommendation systems, the cover is just a collection of abstract sets providing membership info.
Could we define a topological construction on such abstract covers that still reveals the topology of the dataset?

We could study paths in this construction, but as the topological constructions are noisy, we would want to define a notion of stability for such paths.
With this goal in mind, could we define a \emph{filtration} from the abstract cover?
But unlike in the setting of, e.g., multiscale Mapper \cite{DeMeWa2016}, we do not have a sequence of covers (called a tower of covers)---we want to work with a \emph{single} cover.
How do we define a filtration on a single abstract cover?
Could we prove stability results for such a filtration?
Finally, could we demonstrate the usefulness of our construction on real data?

\subsection{Our Contributions} \label{ssec:contrib}

We introduce a new type of filtration defined on a single abstract cover.
Termed \emph{Steinhaus filtration}, our construction uses Steinhaus distances between elements of the cover.
We generalize the Steinhaus distance between two elements to those of multiple elements in the cover, and define a filtration on a single cover using the generalized Steinhaus distance as the filtration index.
The homology persistence module of a Steinhaus filtration with infinitely many cover elements may not be $q$-tame, even when the covers are in a totally bounded space (see Lemma \ref{lem:nonqtame}).
While this may pose a challenge to derive stability results,
we show a stability result on the Steinhaus filtration when the cover is finite---the largest change in generalized Steinhaus distance provides an upper bound for the bottleneck distance of their homologies (see Theorem \ref{thm:stability}).
We prove that in 1-dimension the Steinhaus filtration is isomorphic to the standard \v{C}ech filtration built on the dataset (see Theorem \ref{thm:cecheqv1d}) and independently that the Vietoris-Rips (VR) filtration completely determines the $1$-skeleton of the Steinhaus filtration in arbitrary dimensions (see Lemma \ref{lem:vrdetstnhs}).
We show by example that the Steinhaus filtration is not isomorphic to the \v{C}ech filtration in higher dimensions (see Proposition \ref{prop:cech_cover_not_iso}).

This filtration is quite general, and enables the computation of persistent homology for datasets without requiring strong assumptions or defining ill-fit metrics.
With real life applications in mind, we study paths in our construction.
Paths provide intuitive explanations of the relationships between the objects that the terminal vertices represent. 
Our perspective of path analysis is that shortest may not be more descriptive---see Figure \ref{fig:SPnotstable} for an illustration. 
Instead, we define a notion of \emph{stability} of paths in the Steinhaus filtration.
Under this notion, a stable path is analogous to a highly persistent feature as identified by persistent homology.

\begin{figure}[ht!]
  \centering
  \includegraphics[width=0.8\textwidth]{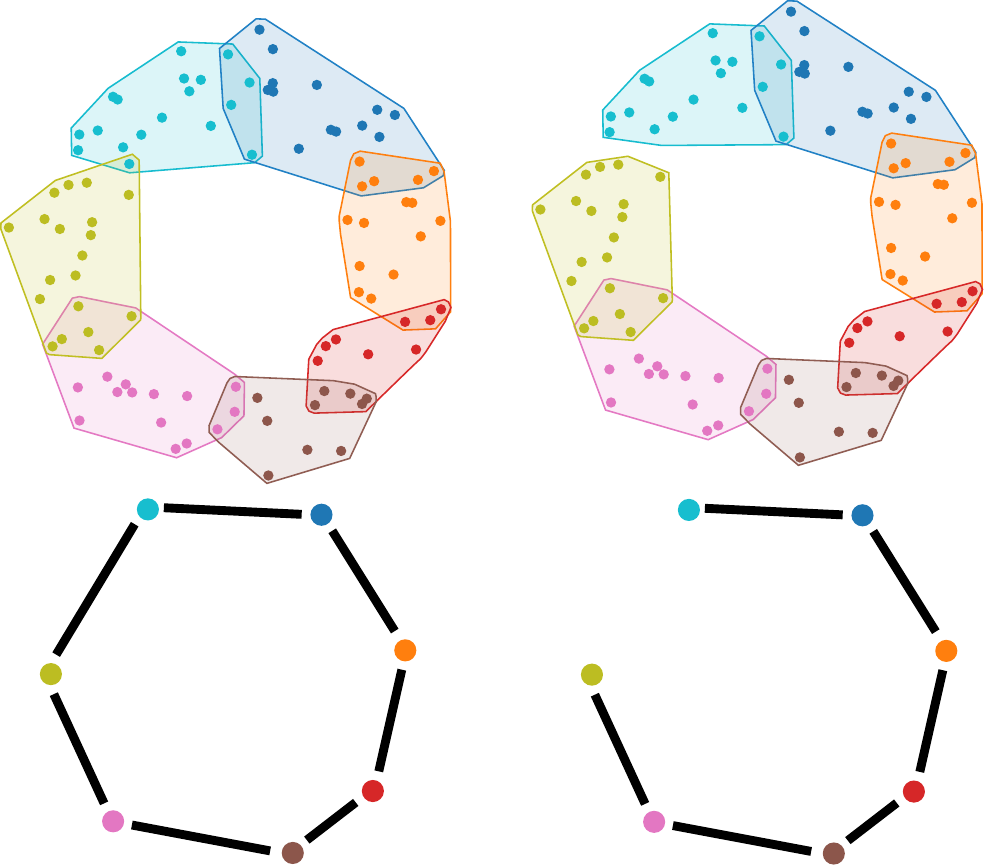}
  \caption{
        A cover with $7$ elements, and its nerve (left column).
        The cyan and green vertices are connected by a single edge
        generated by a \emph{single} point in intersection of cyan and green cover elements.
        Removing this point gives the cover and nerve shown in the right column.
        The path from cyan to green node now has six edges.
  }
  \label{fig:SPnotstable}
\end{figure}

We demonstrate the utility of stable paths in Steinhaus filtrations on two real life applications: a problem in movie recommendation system and Mapper.
We first show how recommendation systems can be modeled using the Steinhaus filtration, and then show how stable paths within this filtration suggest a sequence of movies that represent a ``smooth'' transition from one genre to another  (Section \ref{ssec:recsys}).
We then define an extension of the traditional Mapper \cite{SiMeCa2007} termed the \emph{Steinhaus Mapper Filtration}, and show how its stable paths provide valuable explanations of populations in the Mapper complex, focusing on explainable machine learning (Section \ref{ssec:jcdmpr}).
Code for our computations is available at \href{https://github.com/AmberThrall/SteinhausNotebooks/}{https://github.com/AmberThrall/SteinhausNotebooks/}.

\subsection{Related Work} \label{ssec:relwork}

Cavanna and Sheehy \cite{cavannapersistent} developed theory for a \emph{cover filtration}, built from a cover of a filtered simplicial complex.
But we work from more general covers of arbitrary spaces. 

We are inspired by similar goals as those of previous work addressing stability of the Mapper construction by Dey, M\'emoli, and Wang \cite{DeMeWa2016} and by Carri\`ere and Oudot \cite{CaOu2018}.
Our goal is to provide some consistency, and thus interpretability, to the Mapper construction. 
We incorporate ideas of persistence in a different manner into our construction using a \emph{single} cover, which considerably reduces the effort of generating results. 

The multi-scale Mapper defined by Dey, M\'emoli, and Wang \cite{DeMeWa2016} builds a filtration on the Mapper complex by varying the parameters of a cover. 
This construction yields nice stability properties, but is unwieldy in practice and difficult to interpret. 
Carri\`ere, Michel, and Oudot develop ways based on extended persistence to automatically select a hypercube cover that best captures the topology of the data \cite{CaMiOu2018}. 
This approach constructs one final Mapper that is easy to interpret, but is restricted to the use of hypercube covers, which is just one option of myriad potential covering schemes. 

Kalyanaraman, Kamruzzaman, and Krishnamoorthy developed methods for tracking populations within the Mapper graph by identifying \emph{interesting paths} \cite{KaKaKr2019} and \emph{interesting flares} \cite{KaKaKr2018}.
Interesting paths maximize an interestingness score, and are manifested in the Mapper graph as long paths that track particular populations that show trending behavior.
Flares capture subpopulations that diverge, i.e., show branching behavior.
In our context, we are interested in shorter paths, under the assumption that they provide the most succinct explanations for relationships between subpopulations.

Our work is similar to that of Parthasarathy, Sivakoff, Tian, and Wang \cite{parthasarathy2017quest} in that they use the Jaccard Index of an observed graph to estimate the geodesic distance of the underlying graph.
We take an approach more akin to persistence and make far fewer assumptions about properties of the underlying data.
We are unable to make rigorous estimates of distances but provide many possible representative paths.

The notion of S-paths \cite{purvine2018reprentational} is similar to stable paths if we model covers as hypergraphs, and vice versa.
Stable paths incorporate the size of each cover element (or hyperedges), normalizing the weights by relative size.
This perspective allows us to compare different parts of the resulting structure which may have wildly difference sizes of covers.
In this context, a large overlap of small elements is considered more meaningful than a proportionally small intersection of large elements. 

In Section \ref{ssec:recsys}, we show how the Steinhaus filtration and stable paths can be applied in the context of recommendation systems. 
Our viewpoint on recommendation systems is similar to work of graph-based recommendation systems. 
This is an active area of research and we believe our new perspective of interpreting such systems as covers and filtrations will yield useful tools for advancing the field. 
The general approach of graph-based recommendation systems is to model the data as a bipartite graph, with one set of nodes representing the recommendation items and the other set representing the recommendees. 
We can interpret a bipartite graph as a cover, either with elements being the recommendees covering the items, or elements being the items covering the recommendees.

\section{Steinhaus Filtrations}
\label{sec:definition}    


We define the Steinhaus distance, a generalization of the Jaccard distance between two sets, and further generalize it to an arbitrary collection of subsets of a cover.
These generalizations take a measure $\mu$ and assume that all sets are taken mod differences by sets of measure $0$.

\begin{definition}[Steinhaus Distance \cite{Marczewski1958}]
    The \emph{Steinhaus distance} between sets $A, B$ given a measure $\mu$ is
    $$ d_{St}(A, B) = 1 - \frac{\mu(A \cap B)}{\mu(A \cup B)} = \frac{\mu(A \cup B ) - \mu(A \cap B)}{\mu(A\cup B)} \, . $$
\end{definition}
This distance lies in $[0,1]$, is $0$ when sets are equal and $1$ when they do not intersect.


\begin{definition}[Generalized Steinhaus distance]
    Define the \emph{generalized Steinhaus distance} of a collection of sets $\{U_i\}$ as
    $$d_{St}(\{U_i\}) = 1 - \frac{\mu(\bigcap U_i )}{\mu(\bigcup U_i)} \, .$$
\end{definition}


We make use of this generalized distance to associate birth times to simplices in a nerve. 
Given a cover, we define the \Sf \ as the filtration induced from sublevel sets of the generalized Steinhaus distance function. 
In other words, consider a cover of the space and the nerve of this cover.  
For each simplex in the nerve, we assign as birth time the value of its Steinhaus distance of its corresponding cover elements. 
This filtration captures information about similarity of cover elements and overall structure of the cover.

\begin{definition}[Nerve]
    A \emph{nerve} of a cover $\Ucal=\{U_i\}_{i \in C}$ is an abstract simplicial complex defined such that 
    each subset $\{U_j\}_{j \in J} \subseteq \Ucal$ with $J \subseteq C$,
    defines a simplex if $\bigcap_{j \in J} \{U_j\} \ne \emptyset$. 
    In this construction, each cover element $U_i \in \Ucal$ defines a vertex.
\end{definition}

\begin{definition}[Steinhaus Nerve]
    \label{def:SteinhausNerve}
    The \emph{Steinhaus nerve} of a cover $\Ucal$, denoted $\Nrv_{St}(\Ucal)$, is defined as the nerve of $\Ucal$ with each simplex assigned their generalized Steinhaus distance as weight:
    $$w_\sigma = d_{St}(\{U_i \mid i \in \sigma \})~~\forall \, \sigma \in \Ucal.$$
\end{definition}

Note that $w_{\sigma} < 1$ by definition for every simplex $\sigma \in \Ucal$.
We will use $\Nrv_{St}$ when the cover $\Ucal$ is evident from context.
The weighting scheme satisfies the conditions of a filtration. 

\begin{theorem}
The Steinhaus nerve of a cover $\Ucal$ is a filtered simplicial complex.
\end{theorem}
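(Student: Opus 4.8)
The plan is to check the two things a weighted simplicial complex must satisfy to be a filtered simplicial complex: that each sublevel set of the weight function $w$ is a subcomplex, and that these subcomplexes are nested. The nerve $\Nrv_{St}(\Ucal)$ is already an abstract simplicial complex, by the standard observation that if $\bigcap_{i \in \sigma} U_i \neq \emptyset$ then $\bigcap_{i \in \tau} U_i \neq \emptyset$ for every face $\tau \subseteq \sigma$, since the latter intersection contains the former. Hence it suffices to prove that $w$ is \emph{monotone under taking faces}, i.e.\ $w_\tau \le w_\sigma$ whenever $\tau \subseteq \sigma$. Granting this, the family $K_t := \{\sigma \in \Nrv_{St}(\Ucal) \mid w_\sigma \le t\}$ consists of genuine subcomplexes (if $\sigma \in K_t$ and $\tau \subseteq \sigma$ then $w_\tau \le w_\sigma \le t$), and $K_s \subseteq K_t$ for $s \le t$; since $0 \le w_\sigma < 1$ for every simplex, each vertex has weight $1 - \mu(U_i)/\mu(U_i) = 0$, so the filtration begins with all of $\Ucal$'s vertices at $t=0$ and stabilizes by $t = 1$.

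To establish monotonicity, it is enough by induction to treat the case $\tau = \sigma \setminus \{k\}$, i.e.\ to show that adjoining one set to a collection cannot decrease the generalized Steinhaus distance. Write $J$ for the index set of $\tau$, put $A = \bigcap_{i \in J} U_i$ and $B = \bigcup_{i \in J} U_i$, and set $U = U_k$, so that $\bigcap_{i \in \sigma} U_i = A \cap U$ and $\bigcup_{i \in \sigma} U_i = B \cup U$. Then $w_\tau \le w_\sigma$ is exactly the inequality
\[
   \frac{\mu(A \cap U)}{\mu(B \cup U)} \;\le\; \frac{\mu(A)}{\mu(B)},
\]
which follows at once from monotonicity of $\mu$: from $A \cap U \subseteq A$ we get $\mu(A \cap U) \le \mu(A)$, and from $B \subseteq B \cup U$ we get $\mu(B) \le \mu(B \cup U)$, so the left-hand fraction has a no-larger numerator and a no-smaller denominator. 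Iterating this one-vertex step along any chain from $\sigma$ down to a face $\tau$ yields $w_\tau \le w_\sigma$ in general, which is what we needed.

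The only subtle point is the degenerate case in which some union $\bigcup_{i \in J} U_i$ has measure zero and the generalized Steinhaus distance becomes a $0/0$ expression; this is precisely why the construction is carried out modulo $\mu$-null sets, and we simply record that convention (equivalently, one restricts attention to covers whose elements have positive measure, so that $\mu(B), \mu(B \cup U) > 0$ whenever the corresponding simplices lie in the nerve). Apart from this bookkeeping, the argument is an unwinding of the definitions together with the elementary fact that enlarging a collection shrinks its intersection and grows its union; I expect this measure-zero caveat, rather than the inequality itself, to be the main thing worth stating carefully.
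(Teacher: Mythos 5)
Your proof is correct and follows essentially the same route as the paper: both arguments reduce to the observation that enlarging the index set shrinks the intersection and grows the union, so the ratio $\mu(\cap)/\mu(\cup)$ decreases and the Steinhaus weight is monotone on faces, making the sublevel sets nested subcomplexes. Your one-vertex-at-a-time induction and the measure-zero caveat are minor presentational additions; the paper states the containments for an arbitrary subset $J \subset I$ directly and handles the degenerate case by its standing convention of identifying sets that differ by measure zero.
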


\begin{proof}
    This proof makes use of standard set theory results.
    Let $\Ucal$ be an arbitrary cover of some set $X$ and let $\Nrv_{St}$ be its Steinhaus nerve.
    We consider $\Nrv_{St}$ as a filtration by assigning as the birth time of simplex $\sigma \in \Nrv_{St}$ its weight $w_{\sigma}$.
    To show this is indeed a filtration, we focus on a single simplex $\sigma$ and a face $\tau \preceq \sigma$ to show that the face always appears in the filtration before the simplex. 

    Suppose $\sigma$ is generated from cover elements $\{U_i\}_{i \in I}$ over some index set $I$.
    Let a face $\tau \preceq \sigma$ be generated by cover elements indexed by a subset $J \subset I$.
    The birth time of $\tau$ is
    $$d_{St}(\{U_i\}_{i \in J}) = 1 - \frac{\mu(\cap_{i \in J} U_i)}{\mu(\cup_{i \in J} U_i)}$$
    and the birth time of $\sigma$ is
    $$d_{St}\left(\{U_i\}_{i \in I}\right) = 1 - \frac{\mu(\cap_{i \in I} U_i)}{\mu(\cup_{i \in I} U_i)}.$$
    Clearly, with $\{U_i\}_{i \in J} \subset \{U_i\}_{i \in I}$, we have that $\mu(\cap_{i \in J} U_i) \ge \mu(\cap_{i \in I} U_i)$ and 
    $ \mu(\cup_{i \in J} U_i) \le \mu(\cup_{i \in I} U_i)$. 
    It follows then that $d_{St}(\tau) \le d_{St}(\sigma)$.
    With $K_{\alpha}$ denoting the subcomplex that includes all simplices in $\Nrv_{St}$ with birth time at most $\alpha \in [0,1)$,
    for any $\alpha, \beta \in [0,1)$ with $\alpha < \beta$, we have $K_\alpha \subseteq K_\beta$.
    Hence $\Nrv_{St}(\Ucal)$ is a monotonic filtration.
\end{proof}
Following this result, we refer to the construction as the Steinhaus \emph{filtration}.
Since the only cover filtrations we will use in this paper are Steinhaus filtrations, we will use the two terms interchangeably.

We could study an adaptation of \Sf \ to an analog of the VR complex by building a weighted clique rank filtration from the $1$-skeleton of the \Sf \ \cite{petri2013topological}.
This adaptation drastically reduces the number of intersection and union checks required for the construction. 
The \emph{weight rank clique filtration} is a way of generating a flag filtration from a weighted graph \cite{petri2013topological}. 
We can apply this technique to build a VR analog of the \Sf. 

\paragraph*{Note on Complexity}
\label{sec:complexity}

The complexity of constructing the \Sf \ is by and large inherited directly from the computational complexity of the nerve.
Given a cover $\Ucal$, the nerve could have at most $2^{|\Ucal|} - 1$ simplices and dimension at most $|\Ucal| - 1$ \cite{otter2017roadmap}.
These bounds are equivalent to the corresponding worst case bounds for VR and \v{C}ech complexes. 

The work involved for each simplex in constructing $\Nrv_{St}$ includes computing the volume of intersection and volume of union of the elements in the simplex.
The complexity of union and intersection operations is largely dependent on the type of data being used.
Let $\mathscr{C}_{\rm Unn}(\Vcal)$ and $\mathscr{C}_{\rm Int}(\Vcal)$ be the costs of computing the union and intersection, respectively, of a set of cover elements $\Vcal \subseteq \Ucal$.
In the worst case, we have to do $\mathscr{C}_{\rm Unn}(\Ucal) + \mathscr{C}_{\rm Int}(\Ucal)$ operations per simplex, leading to an overall worst case computational complexity of $( \mathscr{C}_{\rm Unn}(\Ucal) + \mathscr{C}_{\rm Int}(\Ucal) ) (2^{|\Ucal|} - 1)$.
If a hashing-based dictionary is constructed for each set in $\Vcal$, both $\mathscr{C}_{\rm Unn}(\Vcal)$ and $\mathscr{C}_{\rm Int}(\Vcal)$ will be at most linear in $|\Vcal|$ \cite{BiPaPa2007}.

\section{Stability}
\label{sec:stability}

We provide a brief overview of persistence modules and the bottleneck distance. 
For details, see the book by Chazal, de Silva, Glisse, and Oudot \cite{ChdeSGlOu2016}.

Given a poset $T$, we define a \textit{persistence module} $\mathbb{V}$ over $T$ to be an indexed family of vector spaces $\{V_a\}_{a\in T}$ with a family of linear maps $\{v_a^b:V_a\rightarrow V_b\}$ that satisfy $v_b^c\circ v_a^b = v_a^c$ whenever $a\le b\le c$. 
The homologies of the Steinhaus filtered simplicial complex $H_*(\textup{Nrv}_{St}(\mathcal{V}))$ defines a persistence module where each space $V_a\in H_*(\textup{Nrv}_{St}(\mathcal{V}))$ is the homology of the simplicial complex consisting of simplices whose corresponding Steinhaus distance is at most $a$.
Herein, we use simplicial homology with coefficients in some field $k$.

Given two persistence modules $\mathbb{U}$ and $\mathbb{V}$ over $\R$ and $\epsilon\in\R$, a \textit{homomorphism of degree $\epsilon$} is a collection of linear maps 
\[
    \Phi = \{\phi_a:U_a\rightarrow V_{a+\epsilon}\}_{a\in\R}
\]
such that $v_{a+\epsilon}^{b+\epsilon}\circ\phi_a = \phi_b\circ u_a^b$. 
We denote the set of homomorphisms of degree $\epsilon$ by $\textup{Hom}^\epsilon(\mathbb{U},\mathbb{V})$. 
The composition of a homomorphism of degree $\epsilon$ and a homomorphism of degree $\epsilon'$ is a homomorphism of degree $\epsilon+\epsilon'$ given by the composition of their linear maps. 
For $\epsilon\ge0$, a crucial endomorphism of degree $\epsilon$ is the shift map $1_\mathbb{V}^\epsilon\in\textup{Hom}^\epsilon(\mathbb{V},\mathbb{V})$ given by the maps 
\[
    1_\mathbb{V}^\epsilon = \{v_a^{a+\epsilon}:V_a\rightarrow V_{a+\epsilon}\}_{a\in\R}.
\]

We say that two persistence modules $\mathbb{U}$ and $\mathbb{V}$ are \textit{$\epsilon$-interleaved} if there are homomorphisms $\Phi\in\textup{Hom}^\epsilon(\mathbb{U},\mathbb{V})$ and $\Psi\in\textup{Hom}^\epsilon(\mathbb{V},\mathbb{U})$ such that $\Psi\Phi=1_\mathbb{U}^{2\epsilon}$ and $\Phi\Psi=1_\mathbb{V}^{2\epsilon}$.

When a persistence module $\mathbb{V}$ is $q$-tame, i.e., $\textup{rank}(v_a^b)<\infty$ whenever $a<b$, then the persistence module decomposes into the direct sum of interval modules
\[
    \mathbb{V} = \bigoplus_{k\in K}\mathbb{I}(p_k^*,q_k^*).
\]
Each term $\mathbb{I}(p_k^*, q_k^*)$ represent the birth time and death time of a particular feature. 
This decomposition is often represented by the multiset of points $(p_k, q_k)$ in the extended plane $\overline{\R}^2$. 
This multiset is called the \textit{persistence diagram} of $\mathbb{V}$ and denoted $\textup{dgm}(\mathbb{V})$.

In typical applications, the simplicial complex is finite.
As a result, their homologies are finite and any map between their homologies will have finite rank.
In other words, in typical cases the homologies of the Steinhaus filtration will be $q$-tame.
However, if we allow for infinitely many cover elements then $H_*(\textup{Nrv}_{St}(\mathcal{V}))$ may not be $q$-tame.
\begin{lemma} \label{lem:nonqtame}
    There is a Steinhaus filtered simplicial complex in a totally bounded space whose homology persistence module is not $q$-tame.
\end{lemma}
\begin{proof}
    Consider the Mapper complex given in Figure \ref{fig:nonqtame}. 
    Each cover element is a rectangle of size
    \[
        R = 11\times14,~G = 6\times11,~\text{ and }~B = 11\times12
    \]
    with rectangular intersections of size $R\cap G=2.5\times4$ and $B\cap G=2.5\times 3$. 
    Under the Lebesgue measure, we get the Steinhaus distances of $d_{St}(R,G)=200/210$ and $d_{St}(B,G)=366/381$. 
    Hence, under our Steinhaus filtration the cycles going around the red-green-red-green diamonds in the complex are born at 200/210 and the remaining cycles are born at 366/381. 
    Since these cycles form a linearly independent set, we get that in the first homology persistence module the map from $200/210$ to $366/381$ has infinite rank.

    \begin{figure}[ht!]
        \begin{center}
            \def\svgwidth{0.7\textwidth}
            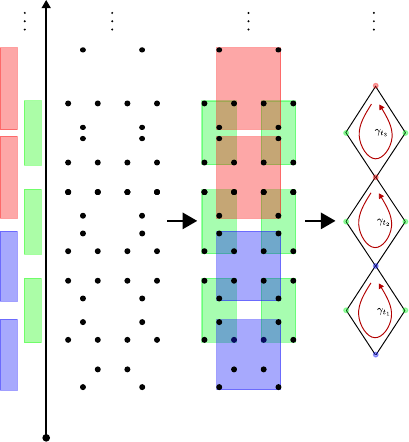
        \end{center}
        \caption{A non $q$-tame Steinhaus filtered Mapper complex.
        The center boxes follow the pattern of blue, blue, red, red, blue, blue, etc.
        For each diamond in the complex there is a cycle that traverses around the diamond.}
        \label{fig:nonqtame}
    \end{figure}

    One may modify this construction to get a non $q$-tame example in a totally bounded space by shrinking each subsequent cover by a factor of two. 
    Thus, if we make the covers have sizes
    \[
        R_n = 11\times\frac{14}{2^n},~G_n = 6\times\frac{11}{2^n},~\text{ and }~B_n=11\times\frac{12}{2^n}
    \]
    and position them such that
    \begin{align*}
        R_n\cap G_n &= 2.5\times\frac{4}{2^n} & R_n\cap G_{n+1}&=2.5\times\frac{4}{2^{n+1}} \\
        B_n\cap G_n &= 2.5\times\frac{3}{2^n} & B_n\cap G_{n+1} &= 2.5\times\frac{3}{2^{n+1}}
    \end{align*}
    then the covers are contained in a finite rectangle and the map in the first homology persistence module from $354/364$ to $630/645$ will have infinite rank.
\end{proof}


\begin{definition}
    A \textit{partial matching} between multisets $A$ and $B$ is a collection of pairs $M\subseteq A\times B$ such that for every $a\in A$, there is at most one $b\in B$ such that $(a,b)\in M$ and for every $b\in B$, there is at most one $a\in A$ such that $(a,b)\in M$.
\end{definition}

Let $\Delta\subset\overline{\R}^2$ be the diagonal. 
The cost of a partial matching $M$ is given by
\[
    \textup{cost}(M) := \max\left\{\max_{a\in A}\delta(a),\max_{b\in B}\delta(b)\right\}, \\ 
\]
where
\[
    \delta(p) = \begin{cases}
        \|p-p'\|_\infty & \text{if there is some $p'$ such that }(p,p')\in M\text{ or }(p',p)\in M, \text{ and } \\
        \inf_{q\in\Delta}\|p-q\|_\infty & \text{otherwise.}
    \end{cases}
\]

\begin{definition}[Bottleneck distance]
    The bottleneck distance between $A$ and $B$ is given by 
    \[
        d_b(A,B) = \inf\{\textup{cost}(M):M\text{ is a partial matching between $A$ and $B$}\}.
    \]
\end{definition}


\begin{theorem}[\cite{ChdeSGlOu2016}]\label{thm:bottleneck_dist_bound}
    If $\mathbb{U}$ is a $q$-tame persistence module then it has a well-defined persistence diagram $\textup{dgm}(\mathbb{U})$. 
If $\mathbb{U},\mathbb{V}$ are $q$-tame persistence modules that are $\epsilon$-interleaved then\\ $d_b(\textup{dgm}(\mathbb{U}),\textup{dgm}(\mathbb{V}))\le\epsilon$.
\end{theorem}

We define a multivalued map $C$ between $X$ and $Y$, denoted $C:X\rightrightarrows Y$, to be a subset of $X\times Y$ such that the projection onto $X$ is surjective. 
A multivalued map $C:X\rightrightarrows Y$ is said to be a \textit{correspondence} if the projection onto $Y$ is surjective, or equivalently,
\[
    C^\top := \{(y,x):(x,y)\in C\} \text{ is a multivalued map.}
\]

\begin{definition}[$\epsilon$-simplicial \cite{ChdeSOu2014}]
    Let $\mathbb{S}=\{S_\alpha\}$ and $\mathbb{T}=\{T_\beta\}$ be filtered simplicial complexes with vertex sets $X$ and $Y$. 
A multivalued map $C:X\rightrightarrows Y$ is $\epsilon$-simplicial if for any $\alpha\in\R$ and any simplex $\sigma\in S_\alpha$, every finite subset of $C(\sigma)$ is a simplex of $T_{\alpha+\epsilon}$.
\end{definition}


\begin{proposition}[\cite{ChdeSOu2014}]
    Let $\mathbb{S}$ and $\mathbb{T}$ be filtered simplicial complexes with vertex sets $X$ and $Y$. 
If $C:X\rightrightarrows Y$ is a correspondence such that $C$ and $C^\top$ are both $\epsilon$-simplicial, then they induce an $\epsilon$-interleaving between $H_*(\mathbb{S})$ and $H_*(\mathbb{T})$ given by $H_*(C)$ and $H_*(C^\top)$.
\end{proposition}

Let $\textup{Nrv}_{St}(\mathcal{U})$ and $\textup{Nrv}_{St}(\mathcal{V})$ be two Steinhaus filtered simplicial complexes. 
Given a correspondence $C:\mathcal{U}\rightrightarrows\mathcal{V}$, we define the \textit{distortion} of $C$ to be
\[
    \textup{dis}(C) := \sup\{|d_{St}(\{U_i\mid i\in\sigma\}) - d_{St}(\{V_i\mid i\in\tau\})|:(\sigma,\tau)\in C\},
\]
i.e., the largest change in birth times across each simplex.
We then define the pseudometric by finding the correspondence with the smallest distortion:
\[
    d(\mathcal{U},\mathcal{V}) = \inf\{\textup{dis}(C)\mid C:\mathcal{U}\rightrightarrows\mathcal{V}\text{ is a correspondence}\}.
\]

\begin{lemma}
    $d(\mathcal{U},\mathcal{V})$ is a pseudometric.
\end{lemma}
\begin{proof}~
    \begin{enumerate}
        \item The identity correspondence $1_\mathcal{U}=\{(U,U):U\in\mathcal{U}\}$ has distortion 0. 
        Hence, $d(\mathcal{U},\mathcal{U})=0$.

        \item Let $C:\mathcal{U}\rightrightarrows\mathcal{V}$ be a correspondence. 
        Then $\textup{dis}(C)=\textup{dis}(C^\top)$. 
        Hence, $d(\mathcal{U},\mathcal{V})=d(\mathcal{V},\mathcal{U})$.

        \item Let $C:\mathcal{U}\rightrightarrows\mathcal{V}$ and $D:\mathcal{V}\rightrightarrows\mathcal{W}$ be correspondences.
        Then 
        \[
            D\circ C = \{(x,z):\exists y\in\mathcal{V}\text{ such that }(x,y)\in C\text{ and }(y,z)\in D\}
        \]
        is a correspondence from $\mathcal{U}$ to $\mathcal{W}$. 
        For each pair $(\sigma,\tau)\in D\circ C$ there is some simplex $\omega$ such that $\omega\in C(\sigma)$ and $\tau\in D(\omega)$. 
        Note that,
        \begin{align*}
            |d_{St}(\{U_i\mid i\in\sigma\}) - d_{St}(\{W_i\mid i\in\tau\})| &\le |d_{St}(\{U_i\mid i\in\sigma\}) - d_{St}(\{V_i\mid i\in\omega\})| \\
            &\phantom{\le} + |d_{St}(\{V_i\mid i\in\omega\}) - d_{St}(\{W_i\mid i\in\tau\})|,
        \end{align*}
    \end{enumerate}
    \hspace*{0.2in} i.e., $\textup{dis}(D\circ C)\le\textup{dis}(C) + \textup{dis}(D)$. 
    Hence, $d(\mathcal{U},\mathcal{W}) \le d(\mathcal{U},\mathcal{V}) + d(\mathcal{V},\mathcal{W})$.
\end{proof}

$d(\mathcal{U},\mathcal{V})$ does not satisfy the positivity requirement to be a metric. 
Consider the covers $\mathcal{U}$ and $\mathcal{V}$ given in Figure \ref{fig:positiviity_ce}.
The simplicial complexes are not isomorphic, but $d(\mathcal{U},\mathcal{V})=0$.

\begin{figure}[ht]
    \begin{center}
        \begin{tikzpicture}[scale=1.5]
    \pgfmathsetmacro\ya{0}
    \pgfmathsetmacro\yb{0.866}
    \draw[fill=black] (0,\ya) circle (1pt);
    \draw[fill=black] (0.5,\yb) circle (1pt);
    \draw[fill=black] (1,\ya) circle (1pt);
    \draw[fill=black] (1.5,\yb) circle (1pt);
    \draw[fill=black] (2,\ya) circle (1pt);
    \draw[fill=black] (4,\ya) circle (1pt);
    \draw[fill=black] (4.5,\yb) circle (1pt);
    \draw[fill=black] (5,\ya) circle (1pt);
    \draw[fill=black] (5.5,\yb) circle (1pt);
    \draw[fill=black] (6,\ya) circle (1pt);

    \draw (0,\ya) -- (0.5,\yb) -- (1,\ya) -- cycle;
    \draw (1,\ya) -- (1.5,\yb) -- (2,\ya) -- cycle;
    \draw (4,\ya) -- (4.5,\yb) -- (5,\ya) -- cycle;
    \draw (5,\ya) -- (5.5,\yb) -- (6,\ya) -- cycle;
    \draw (0.5,\yb) -- (1.5,\yb);
    \draw (4.5,\yb) -- (5.5,\yb);

    \draw[fill=red,opacity=0.25] (0,\ya) circle (0.52);
    \draw[fill=red,opacity=0.25] (0.5,\yb) circle (0.52);
    \draw[fill=red,opacity=0.25] (1,\ya) circle (0.52);
    \draw[fill=red,opacity=0.25] (1.5,\yb) circle (0.52);
    \draw[fill=red,opacity=0.25] (2,\ya) circle (0.52);
    \draw[fill=red,opacity=0.25] (4,\ya) circle (0.52);
    \draw[fill=red,opacity=0.25] (4.5,\yb) circle (0.52);
    \draw[fill=red,opacity=0.25] (5,\ya) circle (0.52);
    \draw[fill=red,opacity=0.25] (5.5,\yb) circle (0.52);
    \draw[fill=red,opacity=0.25] (6,\ya) circle (0.52);

    \draw (0,\ya) circle (0.52);
    \draw (0.5,\yb) circle (0.52);
    \draw (1,\ya) circle (0.52);
    \draw (1.5,\yb) circle (0.52);
    \draw (2,\ya) circle (0.52);
    \draw (4,\ya) circle (0.52);
    \draw (4.5,\yb) circle (0.52);
    \draw (5,\ya) circle (0.52);
    \draw (5.5,\yb) circle (0.52);
    \draw (6,\ya) circle (0.52);

    \pgfmathsetmacro\ya{-2.5}
    \pgfmathsetmacro\yb{-1.634}
    \draw[fill=black] (0,\ya) circle (1pt);
    \draw[fill=black] (0.5,\yb) circle (1pt);
    \draw[fill=black] (1,\ya) circle (1pt);
    \draw[fill=black] (3,\ya) circle (1pt);
    \draw[fill=black] (3.5,\yb) circle (1pt);
    \draw[fill=black] (4,\ya) circle (1pt);
    \draw[fill=black] (4.5,\yb) circle (1pt);
    \draw[fill=black] (5,\ya) circle (1pt);
    \draw[fill=black] (5.5,\yb) circle (1pt);
    \draw[fill=black] (6,\ya) circle (1pt);

    \draw (0,\ya) -- (0.5,\yb) -- (1,\ya) -- cycle;
    \draw (3,\ya) -- (3.5,\yb) -- (4,\ya) -- cycle;
    \draw (4,\ya) -- (4.5,\yb) -- (5,\ya) -- cycle;
    \draw (5,\ya) -- (5.5,\yb) -- (6,\ya) -- cycle;
    \draw (3.5,\yb) -- (4.5,\yb) -- (5.5,\yb);

    \draw[fill=red,opacity=0.25] (0,\ya) circle (0.52);
    \draw[fill=red,opacity=0.25] (0.5,\yb) circle (0.52);
    \draw[fill=red,opacity=0.25] (1,\ya) circle (0.52);
    \draw[fill=red,opacity=0.25] (3,\ya) circle (0.52);
    \draw[fill=red,opacity=0.25] (3.5,\yb) circle (0.52);
    \draw[fill=red,opacity=0.25] (4,\ya) circle (0.52);
    \draw[fill=red,opacity=0.25] (4.5,\yb) circle (0.52);
    \draw[fill=red,opacity=0.25] (5,\ya) circle (0.52);
    \draw[fill=red,opacity=0.25] (5.5,\yb) circle (0.52);
    \draw[fill=red,opacity=0.25] (6,\ya) circle (0.52);

    \draw (0,\ya) circle (0.52);
    \draw (0.5,\yb) circle (0.52);
    \draw (1,\ya) circle (0.52);
    \draw (3,\ya) circle (0.52);
    \draw (3.5,\yb) circle (0.52);
    \draw (4,\ya) circle (0.52);
    \draw (4.5,\yb) circle (0.52);
    \draw (5,\ya) circle (0.52);
    \draw (5.5,\yb) circle (0.52);
    \draw (6,\ya) circle (0.52);
\end{tikzpicture}
    \end{center}
    \caption{Nonisomorphic simplicial complexes for covers $\mathcal{U}$ (top row) and $\mathcal{V}$ (bottom row) with $d(\mathcal{U},\mathcal{V})=0$.}
    \label{fig:positiviity_ce}
\end{figure}


\addtocounter{theorem}{1}

\begin{proposition}\label{prop:interleaving_result}
    Let $\textup{Nrv}_{St}(\mathcal{U})$ and $\textup{Nrv}_{St}(\mathcal{V})$ be two Steinhaus filtered simplicial complexes. 
If $\epsilon>d(\mathcal{U},\mathcal{V})$, then the persistence modules $H_*(\textup{Nrv}_{St}(\mathcal{U}))$ and $H_*(\textup{Nrv}_{St}(\mathcal{V}))$ are $\epsilon$-interleaved.
\end{proposition}
\begin{proof}
    If $\epsilon>d(\mathcal{U},\mathcal{V})$ then there is a correspondence $C:\mathcal{U}\rightrightarrows\mathcal{V}$ such that $\textup{dis}(C)\le\epsilon$. 
    Notice that for any simplex $\sigma\in\textup{Nrv}_{St}(\mathcal{U})$ with birth time at most $\alpha$ and simplex $\tau\subseteq C(\sigma)$,
    \[
        |d_{St}(\{U_i\mid i\in\sigma\}) - d_{St}(\{V_i\mid i\in\tau\})|\le\epsilon.
    \]
    By the reverse-triangle inequality we get that
    \[
        d_{St}(\{V_i\mid i\in\tau\}) \le d_{St}(\{U_i\mid i\in\sigma\}) + \epsilon \le \alpha + \epsilon,
    \]
    i.e., $C$ is $\epsilon$-simplicial. 
    An identical argument shows that $C^\top$ is also $\epsilon$-simplicial.
\end{proof}

\begin{theorem}
    \label{thm:stability}
    Let $\textup{Nrv}_{St}(\mathcal{U})$ and $\textup{Nrv}_{St}(\mathcal{V})$ be two Steinhaus filtered simplicial complexes with $q$-tame homology persistence modules. 
    Then
    \[
        d_b(\textup{dgm}(H_*(\textup{Nrv}_{St}(\mathcal{U}))),\textup{dgm}(H_*(\textup{Nrv}_{St}(\mathcal{V})))) \le d(\mathcal{U},\mathcal{V}).
    \]
\end{theorem}
\begin{proof}
    Follows from Proposition \ref{prop:interleaving_result} and Theorem \ref{thm:bottleneck_dist_bound}.
\end{proof}

\section{Equivalence}
\label{sec:equivalence}

To situate the Steinhaus filtration in the context of standard filtrations built on point clouds in Euclidean space, we wish to show the equivalence of the Steinhaus filtration to the \v{C}ech and VR filtrations under certain conditions. 
We first show that the \v{C}ech filtration on a finite set of points in dimension one, i.e., the nerve of balls with radius $r$ around each point and over a sequence of $r$, and the Steinhaus Nerve constructed from the terminal cover of the \v{C}ech filtration are isomorphic (see Theorem \ref{thm:cecheqv1d}).
Precisely,
there exists a continuous bijection between insertion times of the Steinhaus Nerve and insertion times of the \v{C}ech filtration.

We provide experimental evidence for the $1$-skeletons of the Steinhaus filtration and the VR filtration being isomorphic.
We prove one direction of this equivalence in arbitrary dimension---that the VR filtration completely determines the $1$-skeleton of the Steinhaus filtration (see Lemma \ref{lem:vrdetstnhs}).
We prove by example that the Steinhaus filtration and the \v{C}ech filtration in two (or higher) dimension(s) of a finite set of points may not be isomorphic (see Proposition \ref{prop:cech_cover_not_iso}).

Let $\check{C}_r(X)$ be the cover of $X$ by balls of radius $r$ centered on points in $X$.
The \v{C}ech complex is the nerve of this cover.
The \v{C}ech filtration is a sequence of simplicial complexes for all $r$.


\begin{theorem} \label{thm:cecheqv1d}
    Given a finite dataset $X\subset\R$ in dimension one and some radius $R>\diam(X)$ the \v{C}ech filtration constructed from $X$ is isomorphic to the Steinhaus filtration on $X$ constructed from $\check{C}_R(X)$, given the Lebesgue (i.e., volume) measure.
\end{theorem}

\begin{proof}
%


We set $\check{C}( \{ v_i \} )$ as the birth radius of the simplex defined by the set $\{v_i\}$, computed as 
\[ \check{C}(\{v_i\})=\frac{\text{max}_i(v_i)-\text{min}_i(v_i)}{2} \]
since in 1D space the associated simplex is born when balls around the two outermost points intersect.

Let $\{V_i\}$ be the set of balls of radius $R$ centered on the set $\{v_i\}$. Recall that we are using the Steinhaus distance for Lebesgue measure, so $\mu$ computes volume here. Then the generalized Steinhaus distance for those balls is given by $d_{St}(\{V_i\})=1-\frac{\text{min}(v_i+R)-\text{max}(v_i-R)}{\text{max}(v_i+R)-\text{min}(v_i-R)}$, since the mutual intersection of all the balls in this one dimensional space is the interval bounded by the minimum right endpoint of all the balls and the maximum of all left endpoints of all the balls and the union of all the balls has the minimum left endpoint amongst left endpoints and maximum right endpoint amongst right endpoints.
  \begin{align*}
  d_{St}(\{V_i\}) & = 1-\frac{\text{min}(v_i)-\text{max}(v_i)+2R}{\text{max}(v_i)-\text{min}(v_i)+2R} \\
  & = 1-\frac{-2\check{C}(\{v_i\})+2R}{2\check{C}(\{v_i\})+2R} 
  = 1-\frac{R-\check{C}(\{v_i\})}{R+\check{C}(\{v_i\})}.
  \end{align*}    
Solving for $\check{C}(\{v_i\})$, we get 
\[
    \check{C}(\{v_i\})=\frac{Rd_{St}(\{V_i\})}{2-d_{St}(\{V_i\})},
    ~\text{ establishing a bijection between birth times.}
\]

Now since $R>\diam(X)$ and $\{v_i\}\subseteq X$, $0\leq \check{C}(\{v_i\})\leq R$.
Also, $\frac{R-x}{R+x}$ decreases monotonically over the range $x\in [0,R]$.
Thus, $1-\frac{R-x}{R+x}$ increases monotonically on $x\in [0,R]$.
Thus, if we order the subsets of $X$ by increasing birth radius $(s_1,\dots,s_n)$, then the Steinhaus distances $(d_{St}(s_1),$ $\dots,d_{St}(s_n))$ are also in increasing order. Therefore, the two filtrations are isomorphic.
\end{proof}


\begin{lemma} \label{lem:vrdetstnhs}
    There is a map from the birth time of an edge in the VR filtration to its birth time in the Steinhaus filtration.
    Hence the VR filtration determines the $1$-skeleton of the Steinhaus filtration in arbitrary dimensions. 
\end{lemma}

\begin{proof}
The volume of the intersection of two hyperspheres was derived by Li \cite{li2011concise}.
The volume of intersection of two hyperspheres of equal radius $R$ in $\R^n$ with centers distance $d$ apart is defined as 
$$
V^n_\cap(R, d) = \frac{\pi^{n / 2}}{\Gamma (\frac{n}{2} + 1)}R^n I_{1 - (d/2) / R^2}\left(\frac{n+1}{2}, \frac{1}{2}\right)
$$
where $\Gamma$ is the gamma function and $I$ is the regularized incomplete beta function:
$$ I_z(a,b) = \frac{ \Gamma(a + b) \int_0^z u^{a-1}(1-u)^{b-1} du } {\Gamma(a)\Gamma(b)} \, .$$

We can reduce this equation to 
\begin{align*}
V^n_\cap(R, d) 
&= R^n \pi^{ (n-1) / 2}  \frac{ \int_0^{1 - (d/ 2) / R^2} u^{(n-1 )/ 2}(1-u)^{-1/2} du } {\Gamma(\frac{n+1}{2})} \, .
\end{align*}
The volume of an $n$-sphere of radius $R$ is 
$$V_{\circ}(R) = \frac{\pi ^{n/2}}{\Gamma(\frac{n}{2} + 1)}R^n $$
and so the volume of union of two $n$-spheres is 
$$ V^n_{\cup}(R, d) = 2 V^n_{\circ}(R) - V^n_\cap(R,d) \,. $$

We then compute the Steinhaus distance with Lebesgue measure of two spheres in $\R^n$ and radius $R$ with Euclidean distance $d$ apart as 
\begin{align*}
    d^n_{St}(R, d) 
    &=  \frac{2 V^n_{\circ}(R) - 2 V^n_\cap(R,d)}{2 V^n_{\circ}(R) - V^n_\cap(R,d)} \\
    &=  \frac{
        2\frac{\pi ^{n/2}}{\Gamma(\frac{n}{2} + 1)}R^n - 2R^n \pi^{ (n-1) / 2}  \frac{ \int_0^{1 - (d^2/ 2d) / R^2} u^{(n-1 )/ 2}(1-u)^{-1/2} du } {\Gamma(\frac{n+1}{2})}
    }{
        2\frac{\pi ^{n/2}}{\Gamma(\frac{n}{2} + 1)}R^n - R^n \pi^{ (n-1) / 2}  \frac{ \int_0^{1 - (d^2/ 2d) / R^2} u^{(n-1 )/ 2}(1-u)^{-1/2} du } {\Gamma(\frac{n+1}{2})}
    }\\
    &=  \frac{  
            2 \Gamma(\frac{n+1}{2})
            -  n\Gamma(\frac{n}{2})  \pi^{-1/2} \int_0^{1 - (d/ 2) / R^2} u^{(n-1 )/ 2}(1-u)^{-1/2} du 
        }
        { 
            2\Gamma(\frac{n+1}{2}) 
            - \frac{n}{2}\Gamma(\frac{n}{2}) \pi^{-1/2}  \int_0^{1 - (d/ 2) / R^2} u^{(n-1 )/ 2}(1-u)^{-1/2} du
        } \, .
\end{align*}

This equation provides a mapping from the birth time of the edge in the VR filtration to the birth time of the edge in the Steinhaus filtration.
Once an $n$ and $R$ are chosen, the equation readily reduces, producing the birth times of a simplex in the Steinhaus filtration. 
\end{proof}


Next we provide detailed experimental results suggesting that the $1$-skeletons of the Steinhaus and the VR filtrations are isomorphic.
We estimate the area of intersection of 1-spheres using Monte Carlo integration with uniform sampling.
The first plot in Figure \ref{fig:comparison} shows the 50 landmark points along with 20,000 points uniformly sampled around the landmarks for ease of computation.
The middle plot shows the persistence diagrams of dimension 0 and 1 for the VR filtration on the landmarks.
Finally, we show the Steinhaus filtration on the landmarks using balls with radii 0.5 as the covers.
\begin{figure}[htp!]
    \begin{center}
        \includegraphics[width=1\textwidth]{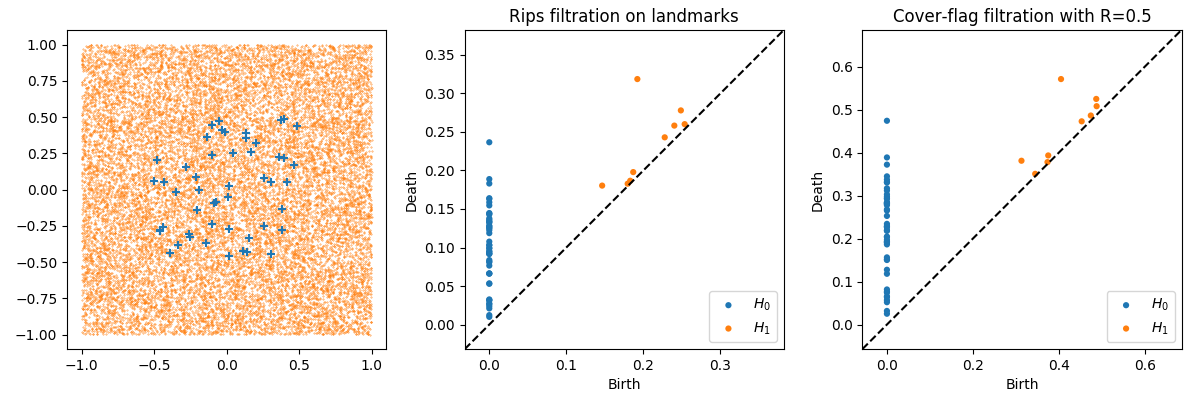}
    \end{center}
    \vspace*{-0.1in}        
    \caption{
    Persistence diagrams for the Vietoris-Rips filtration and the approximate Steinhaus filtration of a set of uniformly sampled points in the plane.}
    \label{fig:comparison}
\end{figure}

We approximate the Steinhaus filtration similar to how the VR approximates \v{C}ech filtration, i.e., by only computing the $1$-skeleton of the nerve, and including any higher order simplices for which all faces are already contained in the filtration, taking the maximum birth time of all faces.
We note that the two persistence diagrams have only minor differences (only in dimension 1).


\begin{proposition} 
    \label{prop:cech_cover_not_iso}
    The \v{C}ech filtration and the Steinhaus filtration are not isomorphic in general.
\end{proposition}
\begin{proof}
Let $X\subset\R^2$ be the four corners of a unit square. We construct a cover where at each point in $X$ there is a ball of radius one (see Figure \ref{fig:nonequiv_example}).
    In the Steinhaus filtration, four 2-simplices are born approximately at 0.935 before the 3-simplex is born around 0.959 under the volume measure.
    However, in the \v{C}ech complex the 2-simplices are born at the same time as the 3-simplex.
\end{proof}

\begin{figure}[ht!]
    \begin{center}
        \begin{tikzpicture}[scale=1.5]
    \draw[fill=black] (0,0) circle (1pt);
    \draw[fill=black] (1,0) circle (1pt);
    \draw[fill=black] (0,1) circle (1pt);
    \draw[fill=black] (1,1) circle (1pt);

    \draw[fill=red,opacity=0.25] (0,0) circle (1);
    \draw[fill=red,opacity=0.25] (1,0) circle (1);
    \draw[fill=red,opacity=0.25] (0,1) circle (1);
    \draw[fill=red,opacity=0.25] (1,1) circle (1);

    \draw (0,0) circle (1);
    \draw (1,0) circle (1);
    \draw (0,1) circle (1);
    \draw (1,1) circle (1);

    \node (cech) at (0.5,-1.5) {\v{C}ech Filtration:};
    \draw[fill=black] (-2,-3) circle (1pt);
    \draw[fill=black] (-1,-3) circle (1pt);
    \draw[fill=black] (-2,-2) circle (1pt);
    \draw[fill=black] (-1,-2) circle (1pt);
    \node (cr0) at (-1.5,-3.5) {$r=0$};

    \draw[fill=black] (0,-3) circle (1pt);
    \draw[fill=black] (1,-3) circle (1pt);
    \draw[fill=black] (0,-2) circle (1pt);
    \draw[fill=black] (1,-2) circle (1pt);
    \draw (0,-3) -- (0,-2) -- (1,-2) -- (1,-3) -- cycle;
    \node (cr2) at (0.5,-3.5) {$r=\frac{1}{2}$};

    \draw[red!60,fill=red!60] (2,-3,-1) -- (3,-3,0) -- (2,-2,0) -- cycle;
    \draw[red!60,fill=red!60] (3,-2,0) -- (3,-3,0) --(2,-2,0)--cycle;
    \draw[red!60,fill=red!60] (2,-3,-1) -- (3,-3,0) --(3,-2,0)--cycle;

    \draw[fill=black] (2,-3,-1) circle (1pt);
    \draw[fill=black] (3,-3,0) circle (1pt);
    \draw[fill=black] (2,-2,0) circle (1pt);
    \draw[fill=black] (3,-2,0) circle (1pt);
    \draw (2,-3,-1) -- (3,-3,0) -- (2,-2,0) -- cycle;
    \draw (3,-2, 0) -- (3,-3,0) -- (2,-2,0) -- cycle;
    \draw[dashed] (2,-3,-1) -- (3,-2,0);
    \node (cr3) at (2.5,-3.5) {$r=\frac{\sqrt{2}}{2}$};

    \node (steinhaus) at (0.5,-4.5) {Steinhaus Filtration:};
    \draw[fill=black] (-4,-5) circle (1pt);
    \draw[fill=black] (-3,-5) circle (1pt);
    \draw[fill=black] (-3,-6) circle (1pt);
    \draw[fill=black] (-4,-6) circle (1pt);
    \node (st0) at (-3.5,-6.5) {$d_{St}=0$};

    \draw[fill=black] (-2,-5) circle (1pt);
    \draw[fill=black] (-1,-5) circle (1pt);
    \draw[fill=black] (-1,-6) circle (1pt);
    \draw[fill=black] (-2,-6) circle (1pt);
    \draw (-2,-5) -- (-1,-5) -- (-1,-6) -- (-2,-6) -- cycle;
    \node (st2) at (-1.5,-6.5) {$d_{St}\approx0.756$};

    \draw[fill=black] (0,-5) circle (1pt);
    \draw[fill=black] (1,-5) circle (1pt);
    \draw[fill=black] (1,-6) circle (1pt);
    \draw[fill=black] (0,-6) circle (1pt);
    \draw (0,-5) -- (1,-5) -- (1,-6) -- (0,-6) -- cycle;
    \draw (0,-5) -- (1,-6);
    \draw (1,-5) -- (0,-6);
    \node (st22) at (0.5,-6.5) {$d_{St}\approx0.9$};

    \draw[red!60,fill=red!60,opacity=0.5] (2,-6,-1) -- (3,-6,0) -- (2,-5,0) -- cycle;
    \draw[red!60,fill=red!60,opacity=0.5] (3,-5,0) -- (3,-6,0) --(2,-5,0)--cycle;

    \draw[fill=black] (2,-6,-1) circle (1pt);
    \draw[fill=black] (3,-6,0) circle (1pt);
    \draw[fill=black] (2,-5,0) circle (1pt);
    \draw[fill=black] (3,-5,0) circle (1pt);
    \draw (2,-6,-1) -- (3,-6,0) -- (2,-5,0) -- cycle;
    \draw (3,-5, 0) -- (3,-6,0) -- (2,-5,0) -- cycle;
    \draw[dashed] (2,-6,-1) -- (3,-5,0);
    \node (cr3) at (2.5,-6.5) {$d_{St}\approx0.935$};

    \draw[red!60,fill=red!60] (4,-6,-1) -- (5,-6,0) -- (4,-5,0) -- cycle;
    \draw[red!60,fill=red!60] (5,-5,0) -- (5,-6,0) --(4,-5,0)--cycle;

    \draw[fill=black] (4,-6,-1) circle (1pt);
    \draw[fill=black] (5,-6,0) circle (1pt);
    \draw[fill=black] (4,-5,0) circle (1pt);
    \draw[fill=black] (5,-5,0) circle (1pt);
    \draw (4,-6,-1) -- (5,-6,0) -- (4,-5,0) -- cycle;
    \draw (5,-5, 0) -- (5,-6,0) -- (4,-5,0) -- cycle;
    \draw[dashed] (4,-6,-1) -- (5,-5,0);
    \node (cr4) at (4.5,-6.5) {$d_{St}\approx0.959$};
\end{tikzpicture}
    \end{center}
    \caption{Example showing that the \v{C}ech filtration and Steinhaus filtration are not isomorphic in general. The \v{C}ech filtration (middle) goes from 1-simplices directly to a 3-simplex whereas the Steinhaus filtration (bottom) of the cover (top) has four 2-simplices born before the 3-simplex.}
    \label{fig:nonequiv_example}
\end{figure}

\section{Stable Paths}
\label{sec:paths}


We develop a theory of stable paths within a Steinhaus filtration. 
We provide an algorithm for computing a most stable path from one vertex to another.
Note that a most stable path might not be a shortest path in terms of number of edges.
Conversely, a shortest path might not be highly stable.
Since the two objectives are at odds with each other, we provide an algorithm to identify a family of shortest paths as we vary the stability level, akin to computing persistent homology.

We were studying shortest paths in a Mapper graph constructed on a machine learning model as ways to illustrate the relations between the data as identified by the model.  
In this context, shortest paths found could have low Steinhaus distance, and thus could be considered noise. 
This motivated our desire to find stable paths, as they would intuitively be most representative of the dataset and stable with respect to changing parameters in Mapper or changing data.

\begin{definition}[$\rho$-Stable Path]
    Given a Steinhaus distance $\rho$, a path $P$ is said to be \emph{$\rho$-stable} if 
    $$ \max \{d_{St}(e) \mid e \in P \} \leq  \rho \,  $$
    where $d_{St}(e)$ is the Steinhaus distance associated with edge $e$.
\end{definition}
It follows that a $\rho_1$-stable path is also $\rho_2$-stable for any $\rho_2 \geq \rho_1$.
Also, a $\rho_1$-stable path $P_1$ is more stable than a $\rho_2$-stable path $P_2$ when $\rho_1 < \rho_2$.
In this case, we have a higher confidence that the edges in $P_1$ do exist, and are not due to noise, than the edges in $P_2$.


\begin{definition}[Most Stable Path]
Given a pair of vertices $s$ and $t$, a \emph{most stable $s$-$t$ path} is a $\rho$-stable path between $s$ and $t$ for the smallest value of $\rho$.
If there are multiple $s$-$t$ paths at the same minimum $\rho$ value, a shortest path among them is defined as a most stable path.
\end{definition}


%
The problem of finding the most stable $s$-$t$ path can be solved as a minimax path problem on an undirected graph, which can solved efficiently using, e.g., range minimum queries \cite{DeLaWe2009}.

We are then left with two paths between vertices $s$ and $t$, the shortest and the most stable.
It should be clear that the shortest path is not necessarily stable and the stable path is not necessarily short.
As these two notions, stable and short, are at odds with each other, we are interested in computing the entire Pareto frontier between the short and stable path,
which will be a curve capturing a set of paths that represent the best trade-off between their stability and length.
Moving along the Pareto frontier, one could improve stability at the expense of length or vice versa, but one could not improve both at once.
We present an algorithm to identify the Pareto frontier in Algorithm \ref{fig:paretoaglo}, and a visualization of the output from this algorithm in Figure \ref{fig:frontier}.

\begin{algorithm}[ht!]
    \caption{Algorithm to identify the Pareto frontier between shortest and most stable paths.}
    \label{fig:paretoaglo}
    \begin{algorithmic}
        \State \textbf{Input:} 1-skeleton $G$ of Steinhaus filtration and vertices $s,t$
        \State \textbf{Output:} A list $\textup{LIST}$ consisting of pairs $[P,\rho]$
        \State set $\textup{LIST}=\emptyset$
        \While{$s,t$ are connected in $G$}
            \State find a shortest path $P$ between $s$ and $t$
            \State $\rho=\max\{d_{St}(e)\mid e\in P\}$
            \If{$\textup{LIST}$ has no pair $[P',\rho']$ with $|P|=|P'|$}
                \State add $[P,\rho]$ to $\textup{LIST}$
            \Else
                \If{$\rho<\rho'$ for $[P',\rho']\in\textup{LIST}$ with $|P|=|P'|$}
                    \State replace $[P',\rho']$ with $[P,\rho]$ in $\textup{LIST}$
                \EndIf
            \EndIf
            \State remove all edges $e$ from $G$ with $d_{St}(e)\ge\rho$
        \EndWhile
    \end{algorithmic}
\end{algorithm}

We repeatedly compute the shortest path, while essentially sweeping over the Steinhaus Distance.
This process results in a Pareto frontier balancing the shortest paths with the stability of those paths. 

\begin{figure}[ht!]
  \centering
  \includegraphics[width=0.8\textwidth]{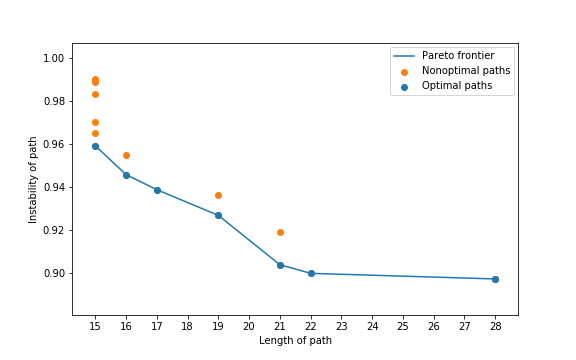}
  \caption{    \label{fig:frontier}
     Pareto frontier between length of path and stability of path.
     }
\end{figure}
The blue points in Figure \ref{fig:frontier} are on the Pareto frontier, while the orange points are the pairs $[P',\rho']$ that get replaced from the LIST in the course of the algorithm.
We then visualize the paths on the Pareto frontier in Figure \ref{fig:demo_paths}.
Continuing our analogy to persistence, the path corresponding to a point on the Pareto frontier which sees a steep rise to the left is considered highly persistent, e.g., the path with length $21$ on the frontier.



\begin{figure}[hbp!]
    \begin{center}
        \includegraphics[width=1\textwidth]{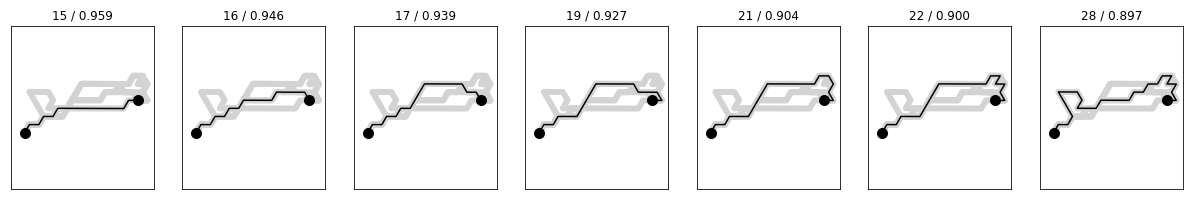}
    \end{center}
    \caption{Visualization of each path on the Pareto frontier shown in Figure \ref{fig:frontier}. }
    \label{fig:demo_paths}
\end{figure}

\section{Applications}
\label{sec:applications}
We apply the Steinhaus filtration and stable paths to recommendation systems and Mapper analysis.
We show how recommendation systems can be modeled using the Steinhaus filtration and then show how stable paths in this filtration can answer the question \emph{what movies should I show my friend first, to wean them into my favorite (but potentially weird) movie?}
We then define an extension of the traditional Mapper complex \cite{SiMeCa2007,CaOu2018} called the Steinhaus Mapper Filtration, and show how stable paths within this filtration can provide valuable explanations of populations in the Mapper construction.
As a direct illustration, we focus on the case of explainable machine learning, where the Mapper complex is constructed with a supervised machine learning model as the filter function, and address the question \emph{what can we learn about the model?} 

Other potential applications include sensor networks and finding driving directions.
Sensor coverage areas are often not uniform balls, and the Steinhaus filtration is aptly suited for developing a filtration.
In the context of communication networks, stable paths could be interpreted as reliable routes.
One could seek driving directions that take not only short, but also ``easy'' routes.

\subsection{Recommendation Systems}
\label{ssec:recsys}
We apply the Steinhaus filtration to a recommendation system dataset and employ the stable paths analysis to compute sequences of movies that ease viewers from one title to another title.
Suppose you have only ever seen the movie \emph{Mulan} and your partner wants to show you \emph{Moulin Rouge}.
It would be jarring to watch that movie,
so your partner might gently build up to it by showing movies similar to both \emph{Mulan} and \emph{Moulin Rouge}.
We compute stable paths that identify such a gentle sequence. 

We use the MovieLens-20m dataset \cite{harper2016movielens},
comprised of 20 million ratings by 138,493 users of 27,278 movies.
Often, these types of datasets are interpreted as bipartite graphs. 
We note that a bipartite graph can be equivalently represented as a covering of one node set with the other, and apply the Steinhaus filtration to build a filtration.
In our case, we interpret each movie as a cover element of the users who have rated the movie.
To reduce computational expenses and noise, we remove all movies with less than 10 ratings and then sample 4000 movies at random from the remaining movies. 

\begin{figure}[ht!]
    \centering
    \includegraphics[width=0.8\textwidth]{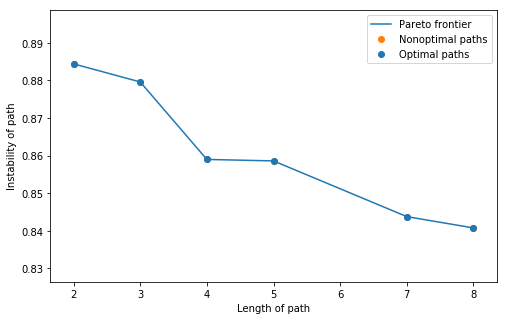}
    \caption{    \label{fig:mulan_frontier}
    Pareto frontier of stable paths between \emph{Mulan} and \emph{Moulin Rouge}.
    }
\end{figure}
Figure \ref{fig:mulan_frontier} shows the computed Pareto frontier of stable paths for the case of \emph{Mulan} and \emph{Moulin Rouge}.
In Table \ref{tab:mulan}, we show two stable paths. 
The stable path with length 4 is found after a large drop in instability.
As the length and stability must be traded off, we think this would be a decent path to choose if you want to optimize both.
The second path is the most stable.
For readers 
familiar with the movies,
the relationship between each edge is clear,
even if the path may be a bit on the longer side.

\begin{table}[htp!]
 \centering
 \vspace*{0.15in}
 \caption{Two sequences of movie transitions.  \label{tab:mulan}}
    \begin{tabular}{p{.4\textwidth}p{.4\textwidth}} \midrule
       \hspace*{0.3in} Shortest Path & \hspace*{0.3in} Most Stable Path\\\midrule
       \vspace*{-0.1in}
         \begin{enumerate}
            \item Mulan (1998)
            \item Dumbo (1941)
            \item The Sound of Music (1965)
            \item Moulin Rouge (2001)
         \end{enumerate}&
       \vspace*{-0.1in}
         \begin{enumerate}
            \item Mulan (1998)
            \item Robin Hood (1973)
            \item Dumbo (1941)
            \item The Sound of Music (1965)
            \item Gone with the Wind (1939)
            \item Psycho (1960)
            \item High Fidelity (2000)
            \item Moulin Rouge (2001)
        \end{enumerate} 
    \end{tabular}
\end{table}

\subsection{Steinhaus Mapper Filtration}
\label{ssec:jcdmpr}

We develop a method of model induction for inspecting a machine learning model. 
The goal is to develop an understanding of the model structure by characterizing the relationship between the feature space and the prediction space. 
The gleaned understanding can help non-experts make sense of algorithmic decisions.
The Mapper construction \cite{SiMeCa2007} is aptly suited for visualizing this functional structure.

\begin{figure}[ht]
    \centering
        \includegraphics[width=0.85\textwidth]{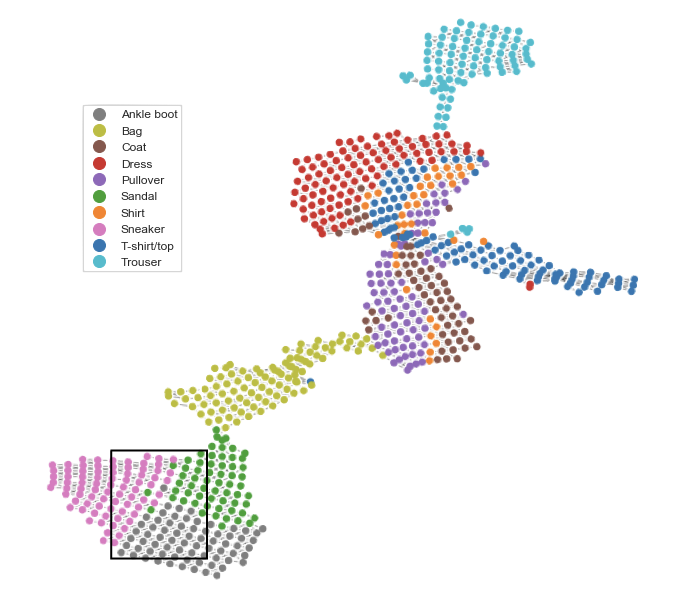}
    \caption{Mapper from logistic regression on Fashion-MNIST data.
    Window marks frame of Figure \ref{fig:multiples}.}
    \label{fig:mapper}
\end{figure}

This application is based on the work of using paths in the Mapper graph to provide explanations for supervised machine learning models \cite{saul2018explain}.
In a similar approach, Rathore, Chalapathi, Palande, and Wang \cite{RaChPaWa2021} used the Mapper construction to visually explore shapes of activations in deep learning.
They focused mostly on bifurcations in the Mapper graph, and did not directly address the question of stability of the features.
We build a Mapper from the predicted probability space of a logistic regression model.
We then extend that Mapper to be a \emph{Steinhaus Mapper Filtration} and analyze its stable paths. 

Given topological spaces $X, Y$, function $f: X \to Y$, and a cover of $Y$,
the Mapper complex is the nerve of the refined, i.e., with each cover element split into path-connected components, pullback cover of $f(Y)$.
Typically, a clustering algorithm is used to compute the pullback of each cover element.

\begin{definition}[Steinhaus Mapper filtration]
Given data $X$, a function $f: X \to Y$, and a cover $U$ of $Y$, we define the \textbf{Steinhaus Mapper} as the Steinhaus nerve (Definition \ref{def:SteinhausNerve}) of the refined pullback cover of $f(U)$:
$$ \Nrv_{St}(f^*{U}) \, . $$
\end{definition}

In order to mitigate sensitivity to parameter choice in Mapper constructions, we tested a range of resolution and gain values.

\begin{figure}[ht!]
    \centering
    \includegraphics[width=1\textwidth]{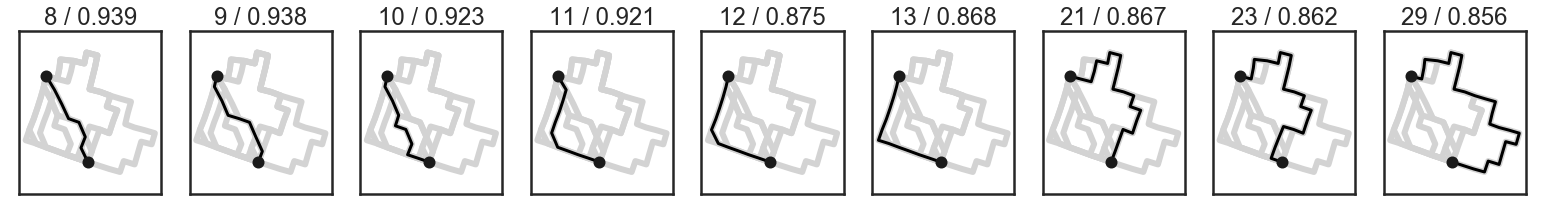}
    \caption{    \label{fig:multiples}
    Depiction of stable paths found along Pareto frontier in Figure \ref{fig:mapper_pareto}.
    }
\end{figure}

Figure \ref{fig:mapper} shows the Steinhaus Mapper filtration constructed from a logistic regression model built on the Fashion-MNIST dataset \cite{xiao2017online},
with 70,000 images of clothing items from 10 classes.
Each image is $28 \time 28$ pixels.
The Fashion-MNIST dataset is typically considered to be a more difficult application than the MNIST handwritten digits.

\begin{figure}[ht!]
  \centering
  \includegraphics[width=0.8\textwidth]{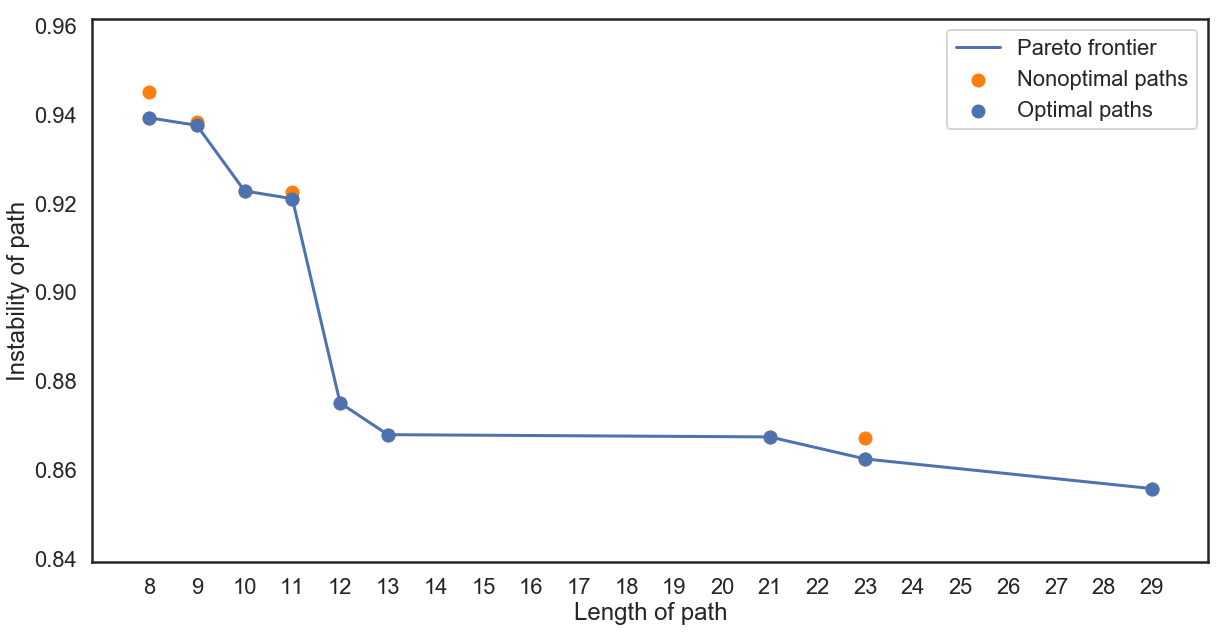}
  \caption{Pareto frontier of stable paths between predominately sneaker and ankle boot vertices. }
    \label{fig:mapper_pareto}
\end{figure}
%
We first reduce the dataset to 100 dimensions using Principal Components Analysis, and then a logistic regression classifier with $l_1$ regularization is trained on the reduced data using 5-fold cross validation on a training set of 60,000 images.
The model evaluated at 93\% accuracy on the remaining 10,000 images.
We then extract the 10-dimensional predicted probability space and use UMAP \cite{McHeMe2020} to reduce it to 2 dimensions.  
This 2-dimensional space is taken as the filter function of the Mapper complex, using a cover consisting of 40 bins along each dimension with 50\% overlap between each bin.
DBSCAN is used as the clustering algorithm in the refinement step \cite{ester1996density}.
KeplerMapper \cite{VeSaEaMa2019} is used to construct the Mapper complex.
Finally, the cover is extracted and the Steinhaus Mapper filtration is constructed.

To illustrate the power of path explanations, we start with two vertices selected from the sneaker and ankle boot regions of the resulting graph.
The three regions of shoes (sneaker, ankle boot, and sandals) are understandably confusing to the machine learning model, and we are interested in where these confusions arise.
Figure \ref{fig:multiples} shows the paths associated with the Pareto frontier (Figure \ref{fig:mapper_pareto}).

In Figure \ref{fig:mapper_pareto}, we show the Pareto frontier between the two chosen vertices.
This frontier shows a large decrease in instability value (thus increase in stability) when moving to a path length of 12.
Paths found after a large increase in stability correspond to highly stable paths (see Section \ref{sec:paths}),
i.e., the path remains the shortest while sweeping the instability value over a comparatively large range. 

Figure \ref{fig:mapper_paths} shows representatives from each vertex in the path for the shortest and the stable path with length 12.
Each row corresponds to one vertex and the columns show a representative from each class represented in the vertex.
Each image shows the multiplicity of that type of shoe in the vertex.

\begin{figure}[htp!]
    \centering
    \vspace*{-0.25in}
        \begin{tabular}{c|c}
            
            \includegraphics[width=0.25\textwidth]{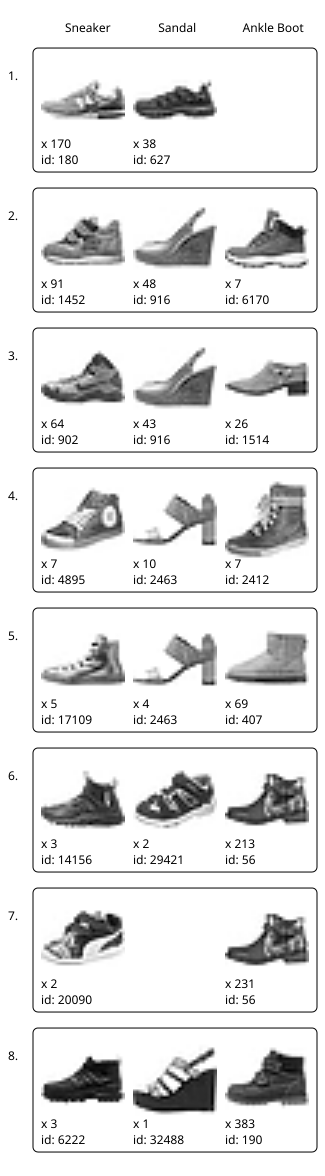}
            &  \includegraphics[width=0.25\textwidth]{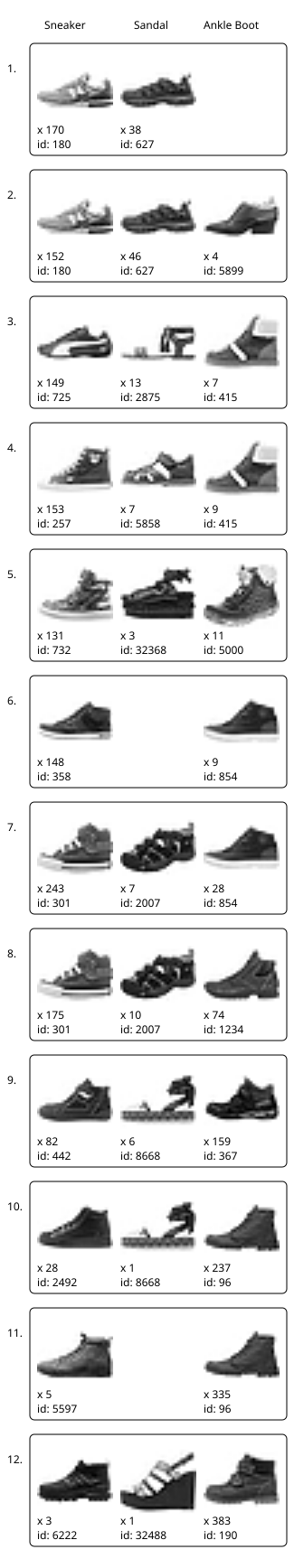}
        \end{tabular}
    \vspace*{-0.07in}
    \caption{Path visualizations for shortest path (left) and stable path with length 12 (right).
    Columns in the visualization are based on the class and each row represents a node in the Mapper graph.
    We show one representative for each node in each column.
    Columns with no shoes shown had no representative of that class in the node.}
    \label{fig:mapper_paths}
\end{figure}

In both paths, vertices start with mostly sneakers and sneaker-like sandals.
They then transition to contain larger proportions of ankle boots, with all three classes showing higher cut tops or high heels.

Along each path we see the relationships between nodes change.
In the most stable path (Right), we see a slow transition from sneaker space to ankle boot space, with some amount of sandals spread throughout.
Along the path, shoes from each of the three classes become taller.
Near the middle of the path, images from sneakers and ankle boots are indistinguishable.
Earlier in the path, we see how some white strips in the sneakers and boots might be confused with negative space in the sandals.

By exploring the two paths, we gain valuable insight into why the model is making a decision.
This can help either reinforce our trust in the model or reject the prediction.
In either outcome, these explanations can strengthen the results of the predictions by including humans in the loop.
This framework is readily applicable to much more important datasets.

\section{Discussion}
\label{sec:conclusion}

We introduced the Steinhaus filtration, a new kind of filtration that enables application of TDA to previously inaccessible types of data.
We then developed a theory of stable paths in the Steinhaus filtration, and provide algorithms for computing the Pareto frontier between short and stable paths.
As proof of their utility to real world applications, we show how these two ideas can be applied to the analysis of recommendation systems and of Mapper in the context of explainable machine learning. 

The results in this paper suggest many new questions. 
The application of recommendation systems leaves us curious if the Steinhaus filtration along with new results such as the one on predicting links in graphs using persistent homology \cite{bhatia2018understanding} could provide methods for answering the main question in recommendation system research: \emph{what item to recommend to the user next?}

While paths and connected components are most amenable to practical interpretations, could other structures in the Steinhaus filtration also suggest insights?
What would holes and loops in the Steinhaus filtration for recommendation systems mean?

We showed that when the cover is finite, the Steinhaus filtration is stable to small changes within the cover.
The example showing non $q$-tameness for instances with infinite covers even when it is in a totally bounded space motivates this following question: can we find a finite subcover that \emph{approximates} such infinite covers?

\section{Acknowledgment}
Broussard, Krishnamoorthy, and Saul acknowledge funding from the US National Science Foundation through grants 1661348 and 1819229.
Part of the research described in this paper was conducted under the Laboratory Directed Research and Development Program at Pacific Northwest National Laboratory, a multiprogram national laboratory operated by Battelle for the U.S.~Department of Energy.

\input{main-arxiv.bbltex}

\end{document}